\newtheorem{thm}{Theorem}[section]
\newtheorem{lem}[thm]{Lemma}
\theoremstyle{definition}
\theoremstyle{remark}
\numberwithin{equation}{section}
\newcommand{\eps}{\varepsilon}
\newcommand{\ignore}[1]{}
\def\one{{\bf 1}}
\def\OPT{\operatorname{OPT}}
\def\argmin{\operatorname{argmin}}
\def\R{{\bf R}}
\def\C{{\mathcal C}}
\def\cost{\operatorname{cost}}
\def\XX{{\cal X}}
\def\dd{{\bf d}}
\begin{document}
\title{Active Learning of Custering with Side Information Using $\eps$-Smooth Relative Regret Approximations}
\author{Nir Ailon and Ron Begleiter}

\setcounter{page}{0}
\maketitle
\thispagestyle{empty}
\maketitle
\begin{abstract}

Clustering is considered a non-supervised learning setting, in which the goal is to partition  a collection of data points
into disjoint clusters.  Often a bound $k$ on the number of clusters is given or assumed by the practitioner. 
Many versions of this problem have been defined, most notably $k$-means and $k$-median.

An underlying problem with the unsupervised nature of clustering it that of determining a similarity
function.
One approach for alleviating this difficulty is known as clustering with side information, alternatively, semi-supervised
clustering.  Here, the practitioner  incorporates  side information
in the form of ``must be clustered'' or ``must be separated'' labels for  data point pairs.  Each such piece of information
comes at a ``query cost'' (often involving human response solicitation).  The collection of labels
is then incorporated in the usual clustering algorithm as either strict or  as soft constraints, possibly adding 
a pairwise constraint penalty function to the chosen clustering objective.

Our work is mostly related to clustering with side information.  
We ask how to choose  the pairs of data points.  Our analysis gives rise
to a method provably better than simply choosing them uniformly at random.
Roughly speaking, we show that the distribution must be biased so as more weight is placed on pairs   incident to elements in smaller clusters in some optimal solution.
Of course we do not know the optimal solution, hence we don't know the bias.  Using the recently introduced
method of $\eps$-smooth relative regret approximations of Ailon, Begleiter and Ezra, 
 we can show an iterative process that improves both the
clustering and the  bias in tandem.  The process provably converges to the optimal 
solution faster (in terms of query cost) than an algorithm selecting pairs uniformly.



\end{abstract}
\section{Introduction}

Clustering of data is probably the most important problem in the theory of unsupervised learning.
In the most standard setting, the goal is to paritition  a collection of data points
into  related groups.
Virtually any large scale application using machine learning either uses clustering as a data preprocessing step
 or as an ends within itself.  

In the most tranditional sense, clustering is an unsupervised learning problem because the solution is computed
from the data itself, with no human labeling involved.  There are many versions, most notably  $k$-means and $k$-median.
The number  $k$ typically serves as an assumed upper bound on the number of output clusters.

An underlying difficulty with the unsupervised nature of clustering is the fact that a similarity (or distance) function
between data points must be chosen by the practitioner as a preliminary step.  This may often not be an easy task. 
Indeed, even if our dataset is readily embedded in some natural  vector (feature) space, we still have the burden
of the freedom of choosing a normed metric, and of applying some transformation (linear or otherwise) on the data for good measure.
Many approaches  have been proposed to tacle this.  In one approach, a  metric learning algorithm is executed as a preprocessing step in order 
to choose a suitable metric (from some family).  This approach is supervised, and  uses distances between pairs
of elements as  (possibly noisy) labels.  
The second approach is known as clustering with side information, alternatively, semi-supervised
clustering.  This approach should be thought of as adding crutches to a lame distance function 
 the practitioner is too lazy to replace.
Instead, she incorporates so-called side information
in the form of ``must be clustered'' or ``must be separated'' labels for  data point pairs.   Each such label
comes at a ``query cost'' (often involving human response solicitation).  The collection of labels
is then incorporated in the chosen clustering algorithm as either strict constraints or  as soft ones, possibly adding 
a pairwise constraint penalty function.

\subsection{Previous Related Work}
Clustering with side information is a
fairly new variant of clustering first described, independently, by
\citet{Demiriz99semi-supervisedclustering}, and \citet{Ben-DorSY99}.
In the machine learning community it is also widely known as \emph{semi-supervised clustering}.
There are a few alternatives for the form of feedback
providing the side-information. The most natural ones are the
single item labels
\citep[e.g.,][]{Demiriz99semi-supervisedclustering}, and the pairwise
constraints \citep[e.g.,][]{Ben-DorSY99}. 

In our study, the side information is pairwise,  comes at a cost and is treated frugaly.
In a related yet different setting,  similarity information for all (quadratically many) pairs
is available but is noisy.  The combinatorial optimization theoretical problem of cleaning
the noise is known as \emph{correlation clustering}  \citep{Bansal02correlationclustering} or 
\emph{cluster editing} \citep{Shamir:2004:CGM}.  Constant factor approximations are known
for various versions of this problems \citep{CharikarW04, ACN08}.  A PTAS is known
for a minimization version in which the number of clusters is fixed \citep{GiotisGuruswami06}.

%


Roughly speaking, there are two main approches for utilizing pairwise side information.
In the first approach, this information is used to fine tune or learn 
a \emph{distance} function, which is then passed on to any standard clustering algorithm.
Examples include \citet{Cohn03semi-supervisedclustering},
\citet{Klein02frominstance-level}, and \citet{Xing02distancemetric}.
The second approach, which is the starting point to our work, modifies  the clustering algorithms's objective
so as to incorporate the pairwise constraints.
\citet{basu05} in his thesis, which also serves as a comprehensive
survey, has championed this approach in conjunction with $k$-means,
and hidden Markov random field clustering algorithms.


\subsection{Our Contribution}
Our main motivation is reducing the number of pairwise similarity labels (query cost) required for
$k$-clustering data using an \emph{active learning} approach.
More precisely, we ask how to choose  which pairs of data points to query.  Our analysis gives rise
to a method provably better than simply choosing them uniformly at random.
More precisely, we show that the distribution from which we should draw pairs from must be biased so as more weight is placed on pairs   incident to elements in smaller clusters in some optimal solution.
Of course we do not know the optimal solution, let alone the  bias.  Using the recently introduced
method of $\eps$-smooth relative regret approximations ($\eps$-SRRA) of \citet{AilonBE11}
 we can show an iterative process that improves both the
clustering and the  bias in tandem.  The process provably converges to the optimal 
solution faster (in terms of query cost) than an algorithm  uniformly selecting pairs.
Optimality here is with respect to the (complete) pairwise constraint penalty function.

In Section~\ref{sec:notation} we define our problem mathematically.  
We then present  the $\eps$-SRRA method of \citet{AilonBE11} for the purpose of self containment in Section~\ref{sec:srra}.
Finally, we present our main result in Section~\ref{sec:srra_for_cc}.

\section{Notation and Definitions}\label{sec:notation}
Let $V$ be a set of points of size $n$.  Our goal is to partition $V$ into $k$ sets (clusters).
There are two sources of information guiding us in the process.  One is unsupervised, possibly emerging from
features attached to each element $v\in V$ together with a chosen distance function.  This information is captured in a utility function such as $k$-means
or $k$-medians.  The other type is supervised, and is encoded as an undirected graph $G=(V,E)$.
An edge $(u,v)\in E$
corresponds to the constraint \emph{$u$,$v$ should be clustered together} and a nonedge $(u,v)\not \in E$ corresponds to the converse.
Each edge or nonedge comes at a \emph{query cost}.  This means that $G$  exists only implicitly.  We uncover the
truth value of the predicate ``$(u,v)\in E$'' for any chosen pairs $u,v$ for a price.
We also assume that $G$ is riddled with human errors, hence it does not necessarily encode a perfect $k$ clustering of the data.
In what follows, we assume $G$ fixed.

A $k$-clustering $\C = \{C_1,\dots, C_k\}$ is a collection of  $k$ disjoint (possibly empty) sets satisfying $\bigcup C_i = V$.
We use the notation $u \equiv_\C v$ if $u$ and $v$ are in the same cluster, and $u \not \equiv_\C v$ otherwise.  

The cost of $\C$ with respect to $G$  is defined as
$$ \cost(\C) = \sum_{ (u,v)\in E} \one_{u \not \equiv_\C v} + \sum_{(u,v)\not \in E} \one_{u \equiv_\C v}\ .$$

Minimizing $\cost(\C)$ over clusterings when $G$ is known as correlation clustering (in complete graphs).
This problem was defined by \citet{BBC04} and has received much attention since (e.g. \cite{ACN08,CharikarGW05,5272169}). \footnote{The original problem definition did not limit the number of output clusters.}
\citet{5272169} achieved a PTAS for this problem, namely, a polynomial time algorithm returning a $k$-clustering
with cost at most $(1+\eps)$ that of the optimal.\footnote{The polynomial degree depends on $\eps$.} 
 Their PTAS is not query efficient:  It requires knowledge of $G$ in its entirety.
In this work we study the query complexity required for achieving a $(1+\eps)$ approximation for $\cost$.  From 
a learning theoretical perspective, we want to find the best $k$-clustering explaining $G$ using as few queries
as possibly into $G$.

\section{The $\eps$-Smooth Relative Regret Approximation  ($\eps$-SRRA) Method}\label{sec:srra}
Our search problem can be cast as a special case of the following more general learning problem. 
Given some possibly noisy structure (e.g. a graph in our case) $h$, the goal is to find the best explanation  using a limited space $\XX$ of hypothesis (in our case $k$-clusterings).  The goal is to minimize a notion of a nonnegative \emph{cost} which is defined as the distance $\dd(f, h)$ between
$f\in \XX$ and $h$.  Assume also that the distance function $\dd$ between $\XX$ and $h$ is an extension of a metric on $\XX$.
 \cite{AilonBE11} have recently shown the following general scheme for finding the best $f\in \XX$.  To explain this scheme, we need to define a notion
of $\eps$-smooth relative regret approximation.

Given a solution $f\in \XX$ (call it the \emph{pivotal} solution) and another solution $g\in \XX$, we define $\Delta_f(g)$ to be $\dd(g, h) - \dd(f, h)$, namely, the difference between the cost of the solution $g$ and the cost of the solution $f$. 
We call this the \emph{relative regret} function with respect to $f$.
 Assume we have oracle access  to a function $\hat \Delta_f : \XX \rightarrow \R$ such that for all $g\in \XX$,
$$ |\hat \Delta_f(g) - \Delta_f(g)| \leq \eps \dd(f, g)\ .$$
If such an estimator function $\hat \Delta_f$ exists, we say that it is an $\eps$-smooth regret approximation ($\eps$-SRRA) for with respect to $f$.  \citet{AilonBE11} show
that if we have an $\eps$-smooth regret approximation function, then it is possible to obtain a $(1+\eps)$-approximation to the optimal solution by repeating the iterative process presented in Figure~\ref{cd:alg}.
\begin{figure}
\fbox{
\parbox[l]{\linewidth}{
\begin{itemize}
\item Start with any solution $f_0\in \XX$
\item Set $t\leftarrow 0$
\item Repeat until some stopping condition:
\begin{itemize}
\item Set $f_{t} \leftarrow \argmin_{g\in \XX} \hat \Delta_{f_{t-1}}(g)$, where $\hat \Delta_{f_{t-1}}$ is an $\eps$-SRRA for $f_{t-1}$.
\item Set $t\leftarrow t+1$
\end{itemize}
\end{itemize}
}
}
\label{cd:alg}\caption{Iterative algorithm using $\eps$-SRRA}
\end{figure}
It is shown that this search algorithm converges \emph{exponentially fast} to an $(1+\eps)$-approximately optimal one.
More precisely, the following is shown:
\begin{thm}\citep{AilonBE11}\label{cd:thm:1}
	Assume input parameters $\eps \in (0,1/5)$  and initializer $\hat{f}_0\in \XX$ of Algorithm~\ref{cd:alg}. Denote 
	$\OPT := \min_{f\in\XX}\dd(f,h)$.  
	$\hat{f}_0\in \XX$ be an arbitrary function.
	Then the following holds for $\hat{f}_t$ obtained in Algorithm~\ref{cd:alg} for all $t \geq 1$:
	\begin{align}
		\dd( \hat{f}_t, h ) &\leq \left( 1 + 8\eps\right) \left(1 + (5\eps)^{t} \right) \OPT + 
							(5 \eps)^t \dd(\hat{f}_0, h). \label{cd:thm:1:eq}
	\end{align}
\end{thm}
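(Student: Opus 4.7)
Proof plan.  Let $f^*\in \argmin_{f\in\XX}\dd(f,h)$, so $\dd(f^*,h)=\OPT$, and abbreviate $D_t := \dd(\hat f_t, h)$.  The plan is to derive a one-step recursion of the form
\begin{equation*}
D_t \;\le\; A(\eps)\,\OPT + B(\eps)\,D_{t-1},
\end{equation*}
with $B(\eps)<1$ for $\eps\in(0,1/5)$, and then unfold it.

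First I would use the algorithmic choice $\hat f_t = \argmin_g \hat\Delta_{\hat f_{t-1}}(g)$, applied to the feasible competitor $g=f^*$, to get $\hat\Delta_{\hat f_{t-1}}(\hat f_t)\le \hat\Delta_{\hat f_{t-1}}(f^*)$.  Next, I would invoke the $\eps$-SRRA property twice, on $g=\hat f_t$ and on $g=f^*$:
\begin{align*}
\Delta_{\hat f_{t-1}}(\hat f_t) &\le \hat\Delta_{\hat f_{t-1}}(\hat f_t) + \eps\,\dd(\hat f_{t-1},\hat f_t),\\
\hat\Delta_{\hat f_{t-1}}(f^*) &\le \Delta_{\hat f_{t-1}}(f^*) + \eps\,\dd(\hat f_{t-1},f^*).
\end{align*}
Chaining these, the $-\dd(\hat f_{t-1},h)$ terms inside $\Delta$ cancel, yielding
\begin{equation*}
D_t \;\le\; \OPT + \eps\,\dd(\hat f_{t-1},\hat f_t) + \eps\,\dd(\hat f_{t-1},f^*).
\end{equation*}

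Then I would exploit the fact that $\dd$ restricts to a metric on $\XX$ and extends a metric from $\XX$ to $h$, so the triangle inequality applies through $h$: $\dd(\hat f_{t-1},\hat f_t)\le D_{t-1}+D_t$ and $\dd(\hat f_{t-1},f^*)\le D_{t-1}+\OPT$.  Substituting and collecting $D_t$ on one side gives
\begin{equation*}
(1-\eps)\,D_t \;\le\; (1+\eps)\,\OPT + 2\eps\,D_{t-1},
\end{equation*}
i.e.\ the recursion $D_t \le \tfrac{1+\eps}{1-\eps}\OPT + \tfrac{2\eps}{1-\eps} D_{t-1}$.  For $\eps<1/5$ the contraction factor satisfies $\tfrac{2\eps}{1-\eps}\le 5\eps$, and unfolding the recursion by induction yields
\begin{equation*}
D_t \;\le\; \tfrac{1+\eps}{1-\eps}\cdot\tfrac{1-(5\eps)^t}{1-5\eps}\,\OPT + (5\eps)^t\,D_0,
\end{equation*}
which I would then loosen to the claimed form $(1+8\eps)(1+(5\eps)^t)\OPT + (5\eps)^t D_0$ by bounding the prefactor using $\eps<1/5$.

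The conceptually routine steps are the two $\eps$-SRRA applications and the triangle inequality.  The main obstacle, and the only place that demands care, is the final constant bookkeeping: verifying that for all $\eps\in(0,1/5)$ one indeed has $\tfrac{1+\eps}{(1-\eps)(1-5\eps)}\le 1+8\eps$ (up to the harmless $(1+(5\eps)^t)$ factor); if the natural bound from the recursion falls short for $\eps$ close to $1/5$, I would sharpen the triangle-inequality step (e.g.\ bounding $\dd(\hat f_{t-1},\hat f_t)$ using $\dd(\hat f_{t-1},f^*)+\dd(f^*,\hat f_t)$ rather than going through $h$, or splitting into cases $D_t \lessgtr c\,\OPT$) so that the final algebra matches the stated constants exactly.
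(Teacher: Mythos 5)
You should first note that this paper does not prove Theorem~\ref{cd:thm:1} at all: it is imported verbatim from \citet{AilonBE11} for self-containment, so there is no in-paper proof to compare against. Your skeleton (pit the minimizer $\hat f_t$ of $\hat\Delta_{\hat f_{t-1}}$ against $f^*$, apply the $\eps$-SRRA guarantee twice so the $\dd(\hat f_{t-1},h)$ terms cancel, close the loop with the triangle inequality through $h$, and unfold the resulting linear recursion) is exactly the natural route and is surely the spirit of the cited argument; up to constants it does prove an estimate of the claimed shape.

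The genuine gap is the step you yourself flagged as ``constant bookkeeping,'' and it does not go through as you propose. Your recursion $(1-\eps)D_t\le(1+\eps)\OPT+2\eps D_{t-1}$ is correct, but replacing the contraction ratio $\tfrac{2\eps}{1-\eps}$ by $5\eps$ inside the geometric sum produces the prefactor $\tfrac{1+\eps}{(1-\eps)(1-5\eps)}$, which diverges as $\eps\to 1/5$ and already exceeds $1+8\eps$ for $\eps$ around $0.03$; the verification ``$\tfrac{1+\eps}{(1-\eps)(1-5\eps)}\le 1+8\eps$ for all $\eps\in(0,1/5)$'' is simply false. Keeping the exact ratio $\beta=\tfrac{2\eps}{1-\eps}$ (only using $\beta\le 5\eps$ for the $D_0$-term) gives the best your recursion can give, namely $D_t\le\tfrac{(1+\eps)(1-\beta^t)}{1-3\eps}\OPT+\beta^t D_0$, whose limiting constant $\tfrac{1+\eps}{1-3\eps}$ is at most $1+8\eps$ only when $\eps\le 1/6$; for $\eps\in(1/6,1/5)$ (e.g.\ $\eps=0.19$, large $t$, $D_0\approx\OPT$) your derived bound is strictly weaker than \eqref{cd:thm:1:eq}, so the stated inequality is not established on the full parameter range. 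Your fallback suggestions do not repair this: bounding $\dd(\hat f_{t-1},\hat f_t)$ through $f^*$ instead of $h$ yields the worse constant $\tfrac{1+3\eps}{1-3\eps}$, and a case split on $D_t$ versus $c\,\OPT$ cannot push the answer below the fixed point of the same one-step inequality. To get the literal constants of \eqref{cd:thm:1:eq} for all $\eps<1/5$ you need a genuinely sharper per-step bound (or must consult the actual derivation in \citet{AilonBE11}); alternatively, state honestly that your argument proves the theorem with $(1+8\eps)$ replaced by $\tfrac{1+\eps}{1-3\eps}$, or for the restricted range $\eps\le 1/6$, which is qualitatively the same convergence statement but not the quoted one.
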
 \noindent
  There are two questions now: (1)
How  can we build $\hat \Delta_f$  efficiently?  
(2) How do we find $ \argmin_{g\in \XX} \hat \Delta_{f}(g)$?

In the case of $k$-clusterings, the target structure $h$ is the graph $G$ and $\XX$ is the space of $k$-clusterings over $V$.
The metric $\dd$ over $\XX$ is taken to be
$$ d(\C,\C') = \frac 1 2 \sum_{u,v} d_{u,v}(\C, \C')\ $$
where $d_{u,v}(\C, \C') = \one_{u\equiv_{\C'} v}\one_{u\not \equiv_{\C} v} + \one_{u\equiv_{\C} v}\one_{u\not \equiv_{\C'} v}$.
By defining $\dd(\C, G) := \cost(\C)$ we clearly extend $\dd$  to a metric over $\XX\cup\{G\}$.

\section{$\eps$-Smooth Regret Approximation for $k$-Correlation Clustering}\label{sec:srra_for_cc}
Denote $ \cost_{u,v}(\C) = \one_{(u,v)\in E}\one_{u \not \equiv_\C v} + \one_{(u,v)\not \in E}\one_{u \equiv_\C v}\ ,$
so that $\cost(\C) =\frac 1 2 \sum_{u,v} \cost_{u,v}(\C)$.
Now consider another clustering $\C'$.  We are interested in the change in cost incurred by replacing $\C$ by $\C'$, in other words in the function $f$ defined as
$$ f(\C') = \cost(\C') - \cost(C)\ .$$

We would like to be able to compute an approximation $\hat f$ of $f$ by viewing only a sample of edges in $G$.  That is, we imagine that each edge query from $G$ costs us one unit, and we would like to reduce that cost while sacrificing our accuracy as little as possible.  
We will refer to the cost incurred by queries as the \emph{query complexity}.
Consider the following metric on the space of clusterings:
$$ d(\C,\C') = \frac 1 2 \sum_{u,v} d_{u,v}(\C, \C')\ $$
where $d_{u,v}(\C, \C') = \one_{u\equiv_{\C'} v}\one_{u\not \equiv_{\C} v} + \one_{u\equiv_{\C} v}\one_{u\not \equiv_{\C'} v}$.
(The distance function  simply counts the number of unordered pairs on which $\C$ and $\C'$ disagree on.)
Before we define our sampling scheme, we slightly reorganize the function $f$. 
Assume that $|C_1| \geq |C_2| \geq \cdots \geq |C_k|$.  Denote $|C_i|$ by $n_i$.
 The function $f$ will now be written as:
\begin{equation}\label{aaa}
f(\C') = \sum_{i=1}^k \sum_{u \in C_i} \left ( \frac 1 2 \sum_{v\in C_i} f_{u,v}(\C')  + \sum_{j=i+1}^k \sum_{v\in C_j} f_{u,v}(\C')   \right )
\end{equation}
where
$$ f_{u,v}(\C')= \cost_{u,v}(\C') - \cost_{u,v}(C)\ .$$ 

Note that $f_{u,v}(\C')\equiv 0$ whenever $\C$ and $\C'$ agree on the pair $u,v$.  For each $i\in [k]$, let $f_i(\C')$ denote the sum running over $u\in \C_i$ in (\ref{aaa}), so that $f(\C') = \sum f_i(\C')$.
Similarly, we now rewrite $d(\C, \C')$ as follows:
\begin{equation}
d(\C, \C') = \sum_{i=1}^k \sum_{u \in C_i} \left ( \sum_{j=i+1}^k \sum_{v\in C_j} \one_{u \equiv_{\C'} v} + \frac 1 2 \sum_{v\in C_i} \one_{u \not\equiv_{\C'} v} \right )
\end{equation}
and denote by $d_i(\C')$ the sum over $u\in C_i$ for $i$ fixed in the last expression, so that $d(\C, \C') = \sum_{i=1}^k d_i(\C')$.

Our sampling scheme will be done as follows.  Let $\eps$ be an error tolerance function, which we set below.   
For each cluster $C_i\in \C'$ and for each element $u\in C_i$ we will draw $k-i+1$ independent samples $S_{ui},S_{u(i+1)}, \dots, S_{uk}$ as follows.
Each sample $S_{uj}$ is a subset of $C_j$ of size $q$ (to be defined below), chosen uniformly with repetitions from $C_j$.
We will take
$$ q = c_2k^2\log n/\eps^4\ . $$ 
where $\delta$ is a failure probability (to be used below), and $c_2$ is a universal constant. 

Finally, we define our estimator $\hat f$ of $f$ to be:
$$  \hat f(\C') = \frac 1 2 \sum_{i=1}^k\frac {|C_i|}q \sum_{u\in C_i}\sum_{v\in S_{ui}} f_{u,v}(\C') + \sum_{i=1}^k\sum_{u\in C_i}\sum_{j=i+1}^k \frac {|C_j|} q\sum_{v\in S_{uj}} f_{u,v}(\C')\ .$$

Clearly for each $\C'$ it holds that $\hat f(\C')$ is an unbiased estimator of $f(\C')$.  We now analyze its error.  
For each $i,j\in [k]$ let $C_{ij}$ denote $C_i \cap C'_j$.  This captures exactly the set of elements in the $i$'th cluster
in $\C$ and the $j$'th cluster in $\C'$.  The distance $d(\C, \C')$ can be written as follows:
\begin{equation}\label{recdecomp}
 d(\C, \C') = \frac 1 2 \sum_{i=1}^k \sum_{j=1}^k |C_{ij} \times (C_i\setminus C_{ij})| + \sum_{j=1}^k \sum_{1 \leq i_1 < i_2 \leq k}|C_{i_1 j}\times C_{i_2 j}|\ .\end{equation}
We call each cartesian set product in (\ref{recdecomp}) a \emph{distance contributing rectangle}.
Note that unless a pair $(u,v)$ appears in one of the distance contributing rectangles, we have $f_{u,v}(\C') = \hat f_{u,v}(\C') = 0$.
Hence we can decompose $\hat f$ and $f$ in correspondence with the distance contributing rectangles, as follows:

\begin{align}
f(\C') &= \frac 1 2 \sum_{i=1}^k\sum_{j=1}^k  F_{i,j}(\C') + \sum_{j=1}^k \sum_{1 \leq i_1 < i_2 \leq k} F_{i_1,i_2, j} \label{recdecompf1}\\
\hat f(\C') &= \frac 1 2 \sum_{i=1}^k\sum_{j=1}^k  \hat F_{i,j}(\C') + \sum_{j=1}^k \sum_{1 \leq i_1 < i_2 \leq k} \hat F_{i_1,i_2, j} \label{recdecompf2}
\end{align}

where
\begin{align}
F_{i,j}(\C') &= \sum_{u\in C_{ij}}\sum_{v\in C_i\setminus C_{ij}} f_{u,v}(\C') \\
\hat F_{i,j}(\C') &= \frac{|C_i|} q \sum_{u\in C_{ij}}\sum_{v \in (C_i\setminus C_{ij})\cap S_{ui}} f_{u,v}(\C')  \label{hatFijdef}\\
F_{i_1, i_2, j}(\C') &= \sum_{u\in C_{i_1j}} \sum_{v\in C_{i_2 j}} f_{u,v}(\C') \\
\hat F_{i_1, i_2, j}(\C') &= \frac{|C_{i_2}|} q \sum_{u\in C_{i_1j}} \sum_{v\in C_{i_2 j}\cap S_{ui_2}} f_{u,v}(\C') 
\label{hatFi1i2jdef} 
\end{align}
(Note that the $S_{ui}$'s are multisets, and the inner sums in (\ref{hatFijdef}) and (\ref{hatFi1i2jdef}) may count elements
multiple times.)
\begin{lem}\label{lemrectangle1}
With probability at least $1-n^{-3}$, the following holds simultaneously for all $k$-clusterings $\C'$ and all $i,j\in [k]$:
\begin{equation}\label{approx1}
|F_{i,j}(\C') - \hat F_{i,j}(\C')| \leq \eps \cdot |C_{ij}\times (C_i\setminus C_{ij})| \ .
\end{equation}
\end{lem}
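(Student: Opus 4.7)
The plan is to view $\hat F_{i,j}(\C') - F_{i,j}(\C')$ as a sum of independent mean-zero random variables indexed by $u\in C_{ij}$, apply Bernstein's inequality, and then take a careful union bound over all relevant parameters.

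\textbf{Step 1 (Reduction to subsets of $C_i$).} First I would observe that for $u\in C_{ij}$ and $v\in C_i\setminus C_{ij}$, the pair is together in $\C$ but split in $\C'$, so
\[ f_{u,v}(\C') \;=\; 2\cdot \one_{(u,v)\in E} - 1 \;\in\; \{-1,+1\} \]
depends only on $G$, not on any further structure of $\C'$. Consequently $(F_{i,j}(\C'),\hat F_{i,j}(\C'))$ depends on $\C'$ only through the subset $B:=C_{ij}\subseteq C_i$, and the ``for all $\C'$'' quantifier reduces to a quantifier over pairs $(i,B)$ with $B\subseteq C_i$.

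\textbf{Step 2 (Decomposition into independent summands).} Fix $i\in [k]$ and $B\subseteq C_i$ with $N=|B|$, $M=|C_i|-N$. Write
\[ \hat F_{i,j}(\C') - F_{i,j}(\C') \;=\; \sum_{u\in B} Z_u, \qquad Z_u:=\frac{|C_i|}{q}\!\!\sum_{v\in S_{u,i}\cap(C_i\setminus B)}\!\!\!\! f_{u,v}(\C') \;-\!\!\sum_{v\in C_i\setminus B}\!\!\! f_{u,v}(\C'). \]
Because the samples $\{S_{u,i}\}_{u\in C_i}$ are drawn independently across $u$, the $Z_u$ are independent. Routine moment estimates (using $|f_{u,v}|=1$ and $|S_{u,i}|=q$) give $Z_u$ mean-zero, $|Z_u|\leq 2|C_i|$, and $\Var(Z_u)\leq M|C_i|/q$; hence $\sum_{u\in B}\Var(Z_u)\leq NM|C_i|/q$.

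\textbf{Step 3 (Bernstein and union bound).} Applying Bernstein's inequality to $\sum_{u\in B} Z_u$ at deviation threshold $t=\eps NM$ yields a tail bound which, combined with the calibration $q=c_2 k^2 \log n/\eps^4$ for a sufficiently large absolute constant $c_2$, is strong enough to be absorbed by a union bound over $(i,j,B)$ stratified by $|B|=N$ (using $\binom{|C_i|}{N}\leq (e|C_i|/N)^N$) and by $i,j\in[k]$, leaving an overall failure probability at most $n^{-3}$.

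\textbf{Main obstacle.} The core difficulty lies in Step 3 in the \emph{unbalanced} regime, where $N$ or $M$ is small relative to $|C_i|$. There $\eps NM$ is small, the ``range'' term $|C_i|t$ in Bernstein's denominator dominates the variance term $NM|C_i|/q$, and the exponent reduces to order $\eps NM/|C_i|$, which is tight against the combinatorial count $\binom{|C_i|}{N}$ of candidate subsets of size $N$. Verifying that the chosen scaling $q=\Theta(k^2\log n/\eps^4)$ leaves enough slack to close the union bound---most likely via a size-based case split that treats very small rectangles with a sharper ad hoc estimate (for instance exploiting that the inner sum of $\pm 1$ values $f_{u,v}$ has bounded absolute sum)---is where the bulk of the technical work will sit.
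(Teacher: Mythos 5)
Your Step 1 (the quantifier over $\C'$ collapses to a quantifier over realized subsets $B=C_{ij}\subseteq C_i$, since $f_{u,v}(\C')=2\cdot\one_{(u,v)\in E}-1$ on such pairs) is exactly the paper's reduction, and the overall skeleton (Bernstein plus a union bound over realized subsets) is the right one. But there is a genuine gap in Steps 2--3, and it is precisely the point you flag as the ``main obstacle'': by blocking the randomness into per-$u$ sums $Z_u$ and invoking Bernstein with the crude almost-sure range $|Z_u|\le 2|C_i|$, you lose the dependence on $q$ exactly where you need it. With $t=\eps NM$ and total variance $NM|C_i|/q$, the range term $|C_i|t$ dominates the variance as soon as $q\gtrsim 1/\eps$ (i.e.\ always, for the stated $q$), so Bernstein's exponent saturates at order $\eps NM/|C_i|\le\eps\min\{N,M\}$ \emph{independently of $q$}. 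Since the number of candidate sets is of order $n^{\min\{N,M\}}$, no choice of $q$ closes the union bound along this route; the claim in Step 3 that the calibration $q=c_2k^2\log n/\eps^4$ ``is strong enough'' does not follow from your own setup, and the hoped-for ``size-based case split with a sharper ad hoc estimate'' is the missing argument, not a routine verification.

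The paper's proof avoids this entirely by applying Bernstein at the finer granularity of the individual samples: writing $\hat F_{i,j}(\C')=\frac{|C_i|}{q}\sum_{u\in B}\sum_{s=1}^q X_u(S_{ui}(s))$, one has $Nq$ independent summands each of range $O(|C_i|/q)$ and second moment $O(M/|C_i|)$, so for all $t\le NM$ the tail is $\exp\{-\Omega(qt^2/(|C_i|NM))\}$; at $t=\eps NM$ this is $\exp\{-\Omega(q\eps^2 NM/|C_i|)\}=\exp\{-\Omega(q\eps^2\min\{N,M\})\}$, which beats the $n^{\min\{N,M\}}$ count already for $q=\Omega(\eps^{-2}\log n)$, with no unbalanced-regime case analysis needed (the extra $k^2/\eps^2$ factors in $q$ are only needed for the companion lemma about the $F_{i_1,i_2,j}$ rectangles). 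Two further fixes you should make: the subset count must be taken as $n^{\min\{|B|,\,|C_i\setminus B|\}}$ (count whichever of $B$ or its complement inside $C_i$ is smaller), since your bound $\binom{|C_i|}{N}\le(e|C_i|/N)^N$ is exponentially too large when $N$ is close to $|C_i|$ while the available exponent is only proportional to $|C_i|-N$; and the final union bound should also run over $i,j\in[k]$ and the possible sizes, which costs only a polynomial factor.
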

\begin{proof}
 Given a $k$-clustering $\C'=\{C'_1,\dots, C'_k\}$, the  predicate (\ref{approx1}) (for a given $i, j$) depends only on the set $C_{i j} = C_{i}\cap C'_j$.  Given a subset $B\subseteq C_i$, we say that $\C'$ $(i,j)$-realizes $B$ if $C_{ij} = B$.

Now fix $i,j$ and $B\subseteq C_i$.  Assume a $k$-clustering $(i,j)$-realizes  $B$.  Let $b=|B|$ and $c=|C_i|$.
Consider the random variable $\hat F_{ij}(\C')$ (see (\ref{hatFijdef})).  Think of the sample $S_{ui}$ as a sequence $S_{ui}(1), \dots, S_{ui}(q)$, where each $S_{ui}(s)$ is chosen uniformly at random from $C_i$ for $s=1,\dots, q$
We can now rewrite $\hat F_{ij}(\C')$ as follows:
$$ \hat F_{i,j}(\C') = \frac c q \sum_{u\in B}\sum_{s=1}^q X(S_{ui}(s))$$
where 
$$ X(v) = \begin{cases} f_{u,v}(\C') & v\in C_i\setminus C_{ij} \\ 0 & \mbox{otherwise} \end{cases}\ .$$
For all $s=1,\dots q$ the random variable $X(S_{ui}(s))$ is bounded by $2$ almost surely, and its moments satisfy:
\begin{align}
E[X(S_{ui}(s))] =& \frac 1 c \sum_{v\in(C_i\setminus C_{ij})} f_{u,v}(\C') \nonumber \\ 
E[X(S_{ui}(s))^2] \leq& \frac {4(c-b)} c\ .
\end{align}
From this we conclude using Bernstein inequality that for any $t \leq b(c-b)$,
\begin{equation*}
\Pr[|\hat F_{i,j}(\C') - F_{i,j}(\C') | \geq t] \leq \exp\left\{-\frac{qt^2}{16cb(c-b)}\right\}
\end{equation*}
Plugging in $t = \eps b(c-b)$, we conclude
\begin{equation*}
\Pr[|\hat F_{i,j}(\C') - F_{i,j}(\C') | \geq \eps b(c-b)] \leq \exp\left\{-\frac{q\eps^2 b(c-b)}{16c}\right\}
\end{equation*}
Now note that the number of possible sets $B\subseteq C_i$ of size $b$ is at most $n^{\min\{b, c-b\}}$.
Using union bound and recalling our choice of $q$, the lemma follows.

\end{proof}
The Lemma can be easily proven using the Bernstein probability inequality.  A bit more involved is the following:
\begin{lem}\label{lemrectangle2}
With probability at least $1-n^{-3}$, the following holds simultaneously for all $k$-clusterings $\C'$ and for all $i_1,i_2,j\in [k]$ with $i_1 < i_2$:

\begin{equation}\label{approx2}
|F_{i_1, i_2,j}(\C') - \hat F_{i_1,i_2,j}(\C')| \leq \eps \max \left \{|C_{i_1j} \times C_{i_2 j}|, \frac{ |C_{i_1 j} \times (C_{i_1}\setminus C_{i_1j})| }k, \frac {|C_{i_2 j}\times (C_{i_2}\setminus C_{i_2 j})|} k\right \}
\end{equation}
\end{lem}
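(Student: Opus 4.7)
The plan mirrors the proof of Lemma~\ref{lemrectangle1}: for each fixed realization of the data that determines the relevant quantities, bound the tail of $\hat F_{i_1,i_2,j}(\C') - F_{i_1,i_2,j}(\C')$ by Bernstein's inequality, then take a union bound over realizations and over triples $(i_1, i_2, j)$. The new wrinkle is that a cross-cluster rectangle is parametrized by a \emph{pair} of subsets $(B_1, B_2)$ with $B_s \subseteq C_{i_s}$, so the hypothesis count is essentially squared relative to Lemma~\ref{lemrectangle1}; the three-term maximum on the right of (\ref{approx2}), and in particular the $1/k$ scaling on two of its terms, is precisely the slack introduced to accommodate this enlargement.

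Fix $i_1 < i_2$, $j\in [k]$, and subsets $B_1\subseteq C_{i_1}$, $B_2\subseteq C_{i_2}$; condition on $\C'$ realizing $(C_{i_1 j}, C_{i_2 j}) = (B_1, B_2)$. Writing $b_s := |B_s|$, $c_s := |C_{i_s}|$, and
$$ Y_{u,s} := f_{u, S_{u i_2}(s)}(\C')\cdot \one_{S_{u i_2}(s)\in B_2}\ , $$
we have $\hat F_{i_1, i_2, j}(\C') = (c_2/q)\sum_{u\in B_1}\sum_{s=1}^q Y_{u,s}$, where the $Y_{u,s}$ are mutually independent, bounded by $1$ almost surely, with mean $c_2^{-1}\sum_{v\in B_2}f_{u,v}(\C')$ for each fixed $u$ and second moment at most $b_2/c_2$. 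Hence $\hat F_{i_1,i_2,j}$ is unbiased, has variance at most $c_2 b_1 b_2/q$, and each of its $b_1 q$ summands in $\hat F - F$ has magnitude at most $O(c_2/q)$. Setting $M := \max\{b_1 b_2,\, b_1(c_1-b_1)/k,\, b_2(c_2-b_2)/k\}$, Bernstein's inequality yields
$$ \Pr\bigl[\,|\hat F_{i_1,i_2,j}(\C') - F_{i_1,i_2,j}(\C')| \geq \eps M\,\bigr] \;\leq\; 2\exp\!\left(-\Omega\!\left(\frac{q\eps^2 M^2}{c_2(b_1 b_2 + \eps M)}\right)\right)\ . $$

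After a union bound over the $\binom{c_1}{b_1}\binom{c_2}{b_2}\leq n^{\min(b_1,c_1-b_1) + \min(b_2, c_2-b_2)}$ realizations of $(B_1, B_2)$ of prescribed sizes, over the $O(n^2)$ size choices, and over the $O(k^3)$ index triples, it suffices to show that the Bernstein exponent exceeds $(\min(b_1, c_1-b_1) + \min(b_2, c_2-b_2) + O(1))\log n$ for every admissible $(b_1, b_2)$, with $q = c_2 k^2 \log n / \eps^4$. The principal obstacle is this final verification, which splits into three cases according to which term realizes $M$. When $M = b_1 b_2$, the other two inequalities in the definition of $M$ force $b_1 \geq (c_2-b_2)/k$ and $b_2\geq (c_1-b_1)/k$, so each $b_s$ is at least a $1/k$-fraction of $c_s$; the exponent $\Omega(q\eps^2 b_1 b_2/c_2)$ then dominates $(b_1'+b_2')\log n$ after a sub-split on whether $b_s$ exceeds $c_s/2$. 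When $M = b_1(c_1-b_1)/k$, the defining inequality $c_1-b_1\geq k b_2$ lower-bounds the exponent by $\Omega(q\eps^2 b_1(c_1-b_1)/(kc_2))$; combined with the AM-GM estimate $\min(b_s, c_s-b_s) \leq \sqrt{b_s(c_s-b_s)}$ and the case-defining inequality $b_2(c_2-b_2)\leq b_1(c_1-b_1)$, one obtains $b_1' + b_2' \leq 2\sqrt{b_1(c_1-b_1)}$, from which the required bound follows modulo degenerate regimes in which $b_1$ or $c_1 - b_1$ is very small; the case $M = b_2(c_2-b_2)/k$ is entirely symmetric. In every case, the $1/k$ relaxation on two of the three terms of $M$ supplies exactly the additional slack that makes the enlarged, two-sided union bound close.
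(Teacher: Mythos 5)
Your overall strategy coincides with the paper's: fix $(B_1,B_2)$, apply Bernstein's inequality to $\hat F_{i_1,i_2,j}-F_{i_1,i_2,j}$ (your single exponent $\Omega\bigl(q\eps^2M^2/(c_2(b_1b_2+\eps M))\bigr)$ correctly subsumes the paper's two regimes (\ref{bern1})--(\ref{bern2})), count realizations of $(B_1,B_2)$ by $n^{b_1'+b_2'}$ with $b_s'=\min(b_s,c_s-b_s)$, and finish by a case analysis on which term attains $M$. The genuine gap is in that case analysis, which you yourself call the principal obstacle. In the case $M=b_1(c_1-b_1)/k$, your chain (exponent at least $\Omega(q\eps^2b_1(c_1-b_1)/(kc_2))$ against the count bound $b_1'+b_2'\leq 2\sqrt{b_1(c_1-b_1)}$) closes only if $\sqrt{b_1(c_1-b_1)}=\Omega(\eps^2c_2/k)$, and this fails, for instance, when $b_1=O(1)$ and $c_1=c_2=m$ with $m$ much larger than $k^2/\eps^4$: such configurations satisfy all the case-defining constraints (take $b_2\in\{0,1\}$), so the ``degenerate regimes'' you set aside are not vacuous, and the lemma in that range is left with no argument (it is still true there, but only because the AM--GM bound is far too lossy for the actual count, which is just $n^{b_1'+b_2'}$ with $b_1'+b_2'$ small). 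A secondary inaccuracy: in the case $M=b_1b_2$, the claim that each $b_s$ is at least a $1/k$-fraction of $c_s$ is false as stated ($b_1\geq(c_2-b_2)/k$ says nothing when $b_2$ is close to $c_2$), although the sub-split on whether $b_2\leq c_2/2$, as in the paper, does repair that case.

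The hole is fillable, either by the paper's finer sub-splits (on whether $b_2\leq c_2/2$ and on whether $\eps M\leq b_1b_2$, which is how the paper treats its cases 2 and 3), or more cleanly by replacing AM--GM with the bound $\min(x,c-x)\leq 2x(c-x)/c$: since $c_1\geq c_2$ (clusters are indexed by decreasing size), this gives $b_1'+b_2'\leq 2b_1(c_1-b_1)/c_2+2b_2(c_2-b_2)/c_2\leq 4kM/c_2$, while $b_1b_2\leq M$ shows your Bernstein exponent is always $\Omega(q\eps^2M/c_2)$; combining, the exponent is $\Omega\bigl(q\eps^2(b_1'+b_2')/k\bigr)=\Omega\bigl((k/\eps^2)(b_1'+b_2')\log n\bigr)$, which beats the union bound uniformly (the cases $b_1'+b_2'=0$ being trivial) with no case analysis at all. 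As written, however, the proposal does not yet establish the lemma.
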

\begin{proof}

 Given a $k$-clustering $\C'=\{C'_1,\dots, C'_k\}$, the  predicate (\ref{approx2}) (for a given $i_1,i_2, j$) depends only on the sets $C_{i_1 j} = C_{i_1}\cap C'_j$ and $C_{i_2 j} = C_{i_2}\cap C'_j$.  
Given subsets $B_1\subseteq C_{i_2}$ and $B_2\subseteq C_{i_2}$, we say that $\C'$ $(i_1, i_2,j)$-realizes $(B_1, B_2)$ if
$C_{i_1 j} = B_1$ and $C_{i_2 j} = B_2$.

We now fix $i_1<i_2, j$ 
and $B_1\subseteq C_{i_1}$, $B_2\subseteq C_{i_2}$.  Assume a $k$-clustering $\C'$ $(i_1, i_2, j)$-realizes $(B_1, B_2)$.  For brevity, denote $b_\iota=|B_\iota|$ and $c_\iota = |C_{i_\iota}|$ for $\iota=1,2$.
Using Bernstein inequality as before, we conclude that

\begin{equation}\label{bern1}\Pr[|F_{i_1, i_2,j}(\C') - \hat F_{i_1,i_2,j}(\C')| > t] \leq \exp\left \{-\frac {c_3 t^2  q}{b_1b_2c_2}\right \}\ .\end{equation}
for any $t$ in the range $\left [0, b_1b_2\right ]$, for some global $c_4>0$.
For $t$ in the range $(b_1b_2, \infty)$,
\begin{equation}\label{bern2}\Pr[|F_{i_1, i_2,j}(\C') - \hat F_{i_1,i_2,j}(\C')| > t] \leq \exp\left \{-\frac {c_5 t  q}{ c_2}\right \}\ .\end{equation}

\noindent We consider the following three cases.
\begin{enumerate}
\item $b_1b_2 \geq \max \{b_1(c_1-b_1/k, b_2(c_2-b_2)/k\}$.  Hence,
$b_1 \geq (c_2-b_2)/k, b_2\geq (c_1-b_1)/k$.  In this case, plugging in (\ref{bern1}) we get
\begin{align}
\Pr[|F_{i_1, i_2,j}(\C') - \hat F_{i_1,i_2,j}(\C')| > \eps b_1b_2] &\leq \exp\left \{-\frac {c_3 \eps^2 b_1 b_2  q}{c_2}\right \}  \ . \label{case1}
\end{align}
Consider two subcases. (i) If $b_2 \geq c_2/2$ then the RHS of (\ref{case1}) is at most $\exp\left \{-\frac {c_3 \eps^2 b_1  q}{2}\right \}$.  The number of sets $B_1,B_2$ of sizes $b_1, b_2$ respectively is clearly at most $n^{b_1+(c_2-b_2)} \leq n^{b_1 + kb_1}$. Therefore, if $q= O(\eps^{-2}k\log n)$, then with probability at least $1-n^{-6}$ simultaneously for all
$B_1, B_2$ of sizes $b_1, b_2$ respectively and for all $\C'$ $(i_1, i_2, j)$-realizing $(B_1, B_2)$ we have that
$ |F_{i_1, i_2,j}(\C') - \hat F_{i_1,i_2,j}(\C')| \leq \eps b_1 b_2\ .$  
(ii) If $b_2 < c_2/2$  then by our assumption, $b_1 \geq c_2/2k$.  Hence the RHS of (\ref{case1}) is at most $\exp\left \{-\frac {c_3 \eps^2 b_2  q}{2k}\right \}$.   The number of sets $B_1,B_2$ of sizes $b_1, b_2$ respectively is clearly at most $n^{(c_1-b_1)+c_2} \leq n^{b_2(1+k)}$.  Therefore, if $q= O(\eps^{-2}k^2\log n)$, then with probability at least $1-n^{-6}$ simultaneously for all $B_1, B_2$ of sizes $b_1, b_2$ respectively and for all $\C'$ $(i_1, i_2, j)$-realizing $(B_1, B_2)$ we have that
$ |F_{i_1, i_2,j}(\C') - \hat F_{i_1,i_2,j}(\C')| \leq \eps b_1 b_2\ .$  


\item $b_2(c_2-b_2)/k \geq \max\{b_1b_2, b_1(c_1-b_1)/k\}$.  We consider two subcases.
\begin{enumerate}
\item $\eps b_2(c_2-b_2)/k \leq b_1b_2$.  Using (\ref{bern1}), we get
\begin{equation}  \label{case2a} 
\Pr[|F_{i_1, i_2,j}(\C') - \hat F_{i_1,i_2,j}(\C')| > \eps b_2(c_2-b_2)/k] \leq \exp\left \{-\frac {c_3 \eps^2 b_2(c_2-b_2)^2  q} {k^2b_1c_2}\right \} 
\end{equation}
Again consider two subcases.  (i) $b_2 \leq c_2/2$.  In this case we conclude from (\ref{case2a}) 
\begin{equation}  \label{case2ai} 
\Pr[|F_{i_1, i_2,j}(\C') - \hat F_{i_1,i_2,j}(\C')| > \eps b_2(c_2-b_2)/k] \leq \exp\left \{-\frac{c_3 \eps^2 b_2c_2  q}{4k^2b_1} \right \} 
\end{equation}
Now note that by assumption 
\begin{equation}\label{abcd} b_1 \leq (c_2-b_2)/k \leq c_2/k \leq c_1/k\ . \end{equation} Also by assumption, 
$b_1 \leq b_2(c_2-b_2)/(c_1-b_1) \leq b_2c_2/(c_1-b_1)$.  Plugging in (\ref{abcd}), we conclude that $b_1 \leq b_2 c_2/(c_1(1-1/k)) \leq 2b_2c_2/c_1 \leq 2b_2$.  From here we conclude that the RHS of (\ref{case2ai}) is at most $\exp\left \{-\frac{c_3 \eps^2 2 c_2  q}{4k^2} \right \} $.
The number of sets $B_1,B_2$ of sizes $b_1, b_2$ respectively is clearly at most $n^{b_1+b_2} \leq n^{2b_2+b_2} \leq n^{3c_2}$.
Hence, if $q=O(\eps^{-2}k^2\log n)$  then with probability at least $1-n^{-6}$ simultaneously for all $B_1, B_2$ of sizes $b_1, b_2$ respectively and for all $\C'$ $(i_1, i_2, j)$-realizing $(B_1, B_2)$ we have that
$ |F_{i_1, i_2,j}(\C') - \hat F_{i_1,i_2,j}(\C')| \leq \eps b_2(c_2-b_2)/k\ .$  
In the second subcase (ii) $b_2 > c_2/2$.  The RHS of (\ref{case2a}) is at most $\exp\left \{-\frac {c_3 \eps^2 (c_2-b_2)^2  q} {2k^2b_1}\right \} $.  By our assumption, $(c_2-b_2)/(kb_1) \geq 1$, hence this is at most $\exp\left \{-\frac {c_3 \eps^2 (c_2-b_2)  q} {2k}\right \} $.  The number of sets $B_1,B_2$ of sizes $b_1, b_2$ respectively is clearly at most $n^{b_1+(c_2-b_2)} \leq n^{(c_2-b_2)/k + (c_2-b_2)} \leq n^{2(c_2-b_2)}$.  Therefore, if $q= O(\eps^{-2}k\log n)$, then with probability at least $1-n^{-6}$ simultaneously for all $B_1, B_2$ of sizes $b_1, b_2$ respectively and for all $\C'$ $(i_1, i_2, j)$-realizing $(B_1, B_2)$ we have that
$ |F_{i_1, i_2,j}(\C') - \hat F_{i_1,i_2,j}(\C')| \leq \eps b_2(c_2-b_2)/k\ .$  
\item  $\eps b_2(c_2-b_2)/k > b_1b_2$.   We now use (\ref{bern2}) to conclude
\begin{equation}  \label{case2b} 
\Pr[|F_{i_1, i_2,j}(\C') - \hat F_{i_1,i_2,j}(\C')| > \eps b_2(c_2-b_2)/k] \leq \exp\left \{-\frac {c_5 \eps b_2(c_2-b_2)  q} {kc_2}\right \} 
\end{equation}
We again consider the cases (i) $b_2 \leq c_2/2$ and (ii) $b_2 \geq c_2/2$ as above. In (i), we get  that  the RHS of (\ref{case2b}) is at most  $\exp\left \{-\frac {c_5 \eps b_2  q} {2k}\right \}$, that $b_1\leq 2b_2$ and hence the number of
possibilities for $B_1, B_2$ is at most $n^{b_1+b_2} \leq n^{3b_2}$.  In (ii), we get that  the RHS of (\ref{case2b}) is at most  $\exp\left \{-\frac {c_5 \eps (c_2-b_2)  q} {2k}\right \}$, and the number of possibilities for $B_1, B_2$ is at most
$n^{2(c_2-b_2)}$.  For both (i) and (ii) taking $q=O(\eps^{-1}k\log n)$ ensures with probability at least $1-n^{-6}$  simultaneously for all $B_1, B_2$ of sizes $b_1, b_2$ respectively and for all $\C'$ $(i_1, i_2, j)$-realizing $(B_1, B_2)$ we have that
$ |F_{i_1, i_2,j}(\C') - \hat F_{i_1,i_2,j}(\C')| \leq \eps b_2(c_2-b_2)/k\ .$ 
\end{enumerate}


\item $b_1(c_1-b_1)/k \geq \max\{b_1b_2, b_2(c_2-b_2)/k\}$.  We consider two subcases.
\begin{itemize}
\item $\eps b_1(c_1-b_1)/k \leq b_1b_2$.  
 Using (\ref{bern1}), we get
\begin{equation}  \label{case3a} 
\Pr[|F_{i_1, i_2,j}(\C') - \hat F_{i_1,i_2,j}(\C')| > \eps b_1(c_1-b_1)/k] \leq \exp\left \{-\frac {c_3 \eps^2 b_1(c_1-b_1)^2 q} {k^2b_2c_2}\right \} \ .
\end{equation}
As before, consider case (i) in which $b_2 \leq c_2/2$ and (ii) in which $b_2 \geq c_2/2$.  For case (i), we notice that the RHS of (\ref{case2b}) is at most $\exp\left \{-\frac {c_3 \eps^2 b_2(c_2-b_2)(c_1-b_1) q} {k^2b_2c_2}\right \}$ (we used the fact that $b_1(c_1-b_1) \geq b_2(c_2-b_2)$ by assumption).  This is hence at most $\exp\left \{-\frac {c_3 \eps^2 (c_1-b_1) q} {2k^2}\right \}$.  The number of possibilities of $B_1, B_2$ of sizes $b_1,b_2$ is clearly at most $n^{(c_1-b_1) + b_2} \leq n^{(c_1-b_1) + (c_1-b_1)/k} \leq n^{2(c_1-b_1)}$.  From this we conclude that $q=O(\eps^{-2} k^2\log n)$ suffices for this case.
For case (ii), we bound the RHS of (\ref{case3a}) by $\exp\left \{-\frac {c_3 \eps^2 b_1(c_1-b_1)^2 q} {2k^2b_2^2}\right \}$.
Using the assumption that $(c_1-b_1)/b_2 \geq k $, the latter expression is upper bounded  by 
$\exp\left \{-\frac {c_3 \eps^2 b_1 q} {2}\right \}$.  Again by our assumptions, 
\begin{equation}\label{gth} b_1 \geq b_2(c_2-b_2)/(c_1-b_1) \geq (\eps(c_1-b_1)/k)(c_2-b_2)/(c_1-b_1) = \eps(c_2-b_2)/k\ .\end{equation}
The number of possibilities of $B_1,B_2$ of sizes $b_1,b_2$ is clearly at most $n^{b_1 + (c_2-b_2)}$ which by (\ref{gth}) is bounded by $n^{b_1+kb_1/\eps} \leq n^{2kb_1/\eps}$.    From this we conclude that $q=O(\eps^{-3}k\log n)$ suffices for this case.

\item $\eps b_1(c_1-b_1)/k > b_1b_2$.  
\begin{equation}  \label{case3b} 
\Pr[|F_{i_1, i_2,j}(\C') - \hat F_{i_1,i_2,j}(\C')| > \eps b_1(c_2-b_1)/k] \leq \exp\left \{-\frac {c_5 \eps b_1(c_1-b_1)  q} {kc_2}\right \} 
\end{equation}
We consider two sub-cases, (i) $b_1 \leq c_1/2$  and (ii) $b_1 > c_1/2$.  
In case (i), we have that
\begin{align}
 \frac {b_1(c_1-b)}{c_2} & = \frac 1 2 \frac {b_1(c_1-b)}{c_2} + \frac 1 2 \frac {b_1(c_1-b)}{c_2}  \nonumber \\
&\geq \frac 1 2 \frac {b_1c_1}{2c_2} + \frac 1 2 \frac {b_2(c_2-b_2)}{c_2} \nonumber  \\
&\geq b_1/4 + \min\{b_2, c_2-b_2\}/2\ . \nonumber
\end{align}
Hence, the RHS of (\ref{case3b}) is bounded above by $\exp\left \{-\frac {c_5 \eps  q(b_1/4 + \min\{b_2, c_2-b_2\}/2)} {k}\right \}$.  The number of possibilities of $B_1,B_2$ of sizes $b_1,b_2$ is clearly at most $n^{b_1 + \min\{b_2, c_2-b_2\}}$, hence it suffices to take $q=O(\eps^{-1}k\log n)$ for this case.
In case (ii), we can upper bound the RHS of (\ref{case3b})  by $\exp\left \{-\frac {c_5 \eps c_1(c_1-b_1)  q} {2kc_2}\right \}  \geq \exp\left \{-\frac {c_5 \eps (c_1-b_1)  q} {2k}\right \} $.
The number of possibilities of $B_1, B_2$ of sizes $b_1, b_2$ is clearly at most $n^{(c_1-b_1) + b_2}$ which, using our assumptions, is bounded above by $n^{(c_1-b_1) + (c_1-b_1)/k} \leq n^{2(c_1-b_2)}$.  Hence, it suffices to take
$q=O(\eps^{-1}k\log n)$ for this case.
\end{itemize}
\end{enumerate}
This concludes the proof of the lemma.
\end{proof}

\noindent
As a consequence, we get the following:
\begin{lem}\label{epssmoothapprox}
with probability at least $1-n^{-3}$, the following holds simultaneously for all $k$-clusterings $\C'$:
$$ | f(\C') - f(\C) | \leq 3\eps d(\C', \C)\ .$$
\end{lem}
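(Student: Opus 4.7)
The plan is to prove that the sampling estimator $\hat f$ defined above is (up to a constant factor in $\eps$) a smooth relative regret approximation for $f$, i.e. $|\hat f(\C')-f(\C')|\le 3\eps\,d(\C,\C')$ uniformly in $\C'$. (I read the displayed statement as having a typographical slip: since $f(\C)=\cost(\C)-\cost(\C)=0$, the intended left-hand side must be $|f(\C')-\hat f(\C')|$; otherwise the inequality is plainly false.)

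The first step is to condition on the intersection of the two high-probability events provided by Lemmas \ref{lemrectangle1} and \ref{lemrectangle2}; by a union bound this holds with probability at least $1-2n^{-3}$, and the factor of $2$ can be absorbed either by mildly increasing the constant in $q$ or by restating as $1-n^{-3}$ after a trivial adjustment. Under this event, the rectangle-wise bounds (\ref{approx1}) and (\ref{approx2}) are available for \emph{every} $k$-clustering $\C'$ and every choice of indices.

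Next I would plug the decompositions (\ref{recdecompf1}) and (\ref{recdecompf2}) into the triangle inequality to get
\begin{equation*}
|\hat f(\C')-f(\C')| \le \tfrac12\sum_{i,j}|\hat F_{i,j}(\C')-F_{i,j}(\C')| + \sum_{j}\sum_{i_1<i_2}|\hat F_{i_1,i_2,j}(\C')-F_{i_1,i_2,j}(\C')|.
\end{equation*}
Applying Lemma \ref{lemrectangle1} to the first sum gives $\tfrac{\eps}{2}\sum_{i,j}|C_{ij}\times(C_i\setminus C_{ij})|$, which by the first half of (\ref{recdecomp}) is exactly $\eps\,d_{\text{within}}(\C,\C')$. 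Applying Lemma \ref{lemrectangle2} to the second sum gives $\eps$ times a sum of maxima of three quantities; bounding the max by the sum of its three arguments yields three contributions.

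The main bookkeeping obstacle is the last two contributions, which carry an extra factor of $1/k$. For the $\frac{1}{k}|C_{i_1j}\times(C_{i_1}\setminus C_{i_1j})|$ term, I would fix $i_1$ and $j$ and observe that summing over $i_2\in\{i_1+1,\dots,k\}$ multiplies by at most $(k-i_1)/k\le 1$; summing the result over $i_1,j$ reproduces $\sum_{i,j}|C_{ij}\times(C_i\setminus C_{ij})|\le 2d_{\text{within}}(\C,\C')$ via the first half of (\ref{recdecomp}). The symmetric argument (fixing $i_2$ and summing over $i_1<i_2$) handles the third term and yields another $2d_{\text{within}}(\C,\C')$. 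The first of the three contributions, $\sum_j\sum_{i_1<i_2}|C_{i_1j}\times C_{i_2j}|$, is exactly $d_{\text{between}}(\C,\C')$ by the second half of (\ref{recdecomp}). Collecting everything gives $|\hat f(\C')-f(\C')|\le \eps\,d_{\text{within}}+\eps\,d_{\text{between}}+4\eps\,d_{\text{within}}\le 5\eps\,d(\C,\C')$, and one can either recalibrate the input $\eps$ (and $q$ accordingly) or slightly refine the accounting to achieve the claimed constant $3$; the SRRA framework of Section~\ref{sec:srra} is insensitive to such constant factors.
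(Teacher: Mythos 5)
Your proposal is correct and is essentially the paper's own argument: decompose $f-\hat f$ over the distance-contributing rectangles of (\ref{recdecomp}), invoke Lemmas~\ref{lemrectangle1} and~\ref{lemrectangle2} on the high-probability event, bound each maximum by the sum of its three terms, and absorb the $k^{-1}$ factors by extending $\sum_{i_1<i_2}$ to full double sums. Your reading of the statement (the left-hand side should be $|f(\C')-\hat f(\C')|$) and your constant are both the honest outcome of this accounting --- the paper's own displayed terms give coefficient $\tfrac52\eps$ on $\sum_{i,j}|C_{ij}\times(C_i\setminus C_{ij})|$ even though its final line writes $\tfrac32\eps$ --- and, as you note, such constant factors (and the $1-2n^{-3}$ from the union bound) are immaterial to the SRRA framework.
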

\begin{proof}
\begin{align}
| f(\C') - \hat f(\C') | = & \frac 1 2 \sum_{i=1}^k \sum_{j=1}^k |F_{i,j}(\C') - \hat F_{i,j}(\C')| 
+ \sum_{j=1}^k \sum_{1 \leq i_1 < i_2 \leq k}  |F_{i_1, i_2, j}(\C') - \hat F_{i_1, i_2, j}(\C')| \nonumber\\
\leq  &\frac 1 2 \sum_{i=1}^k \sum_{j=1}^k \eps^{-2} k^{-1} |C_{ij} \times (C_i \setminus C_{ij})| \nonumber  \\
& + \eps \sum_{j=1}^k \sum_{i_1 < i_2} \left ( |C_{i_1j}\times C_{i_2 j}| + k^{-1} |C_{i_1 j}\times (C_{i_1} \setminus C_{i_1 j}) | + k^{-1} |C_{i_2 j}\times (C_{i_2} \setminus C_{i_2 j}) | \right ) \nonumber \\
\leq & \frac 1 2 \sum_{i=1}^k \sum_{j=1}^k \eps^{-2} k^{-1} |C_{ij} \times (C_i \setminus C_{ij})|
           + \eps \sum_{j=1}^k \sum_{i_1 < i_2}  |C_{i_1j}\times C_{i_2 j}| \nonumber \\
& + \eps \sum_{j=1}^k  \sum_{i_1=1}^k \sum_{i_2=1}^k k^{-1} |C_{i_1 j}\times (C_{i_1} \setminus C_{i_1 j}) | + 
              \eps \sum_{j=1}^k  \sum_{i_2=1}^k \sum_{i_1=1}^k k^{-1} |C_{i_2 j}\times (C_{i_2} \setminus C_{i_2 j}) | \nonumber \\
\leq & \frac 1 2 \sum_{i=1}^k \sum_{j=1}^k \eps^{-2} k^{-1} |C_{ij} \times (C_i \setminus C_{ij})|
           + \eps \sum_{j=1}^k \sum_{i_1 < i_2}  |C_{i_1j}\times C_{i_2 j}| \nonumber \\
& + \eps \sum_{j=1}^k  \sum_{i_1=1}^k k k^{-1} |C_{i_1 j}\times (C_{i_1} \setminus C_{i_1 j}) | + 
              \eps \sum_{j=1}^k  \sum_{i_2=1}^k k k^{-1} |C_{i_2 j}\times (C_{i_2} \setminus C_{i_2 j}) | \nonumber \\
& \leq  \eps  \frac 3 2\sum_{i=1}^k \sum_{j=1}^k  |C_{ij} \times (C_i \setminus C_{ij})| +  \eps \sum_{j=1}^k \sum_{i_1 < i_2}  |C_{i_1j}\times C_{i_2 j}| \nonumber \\
&\leq 3\eps d(\C, \C') \nonumber
\end{align}
The first equality was (\ref{recdecompf1})-(\ref{recdecompf2}).  The second was Lemmas~\ref{lemrectangle1}-~\ref{lemrectangle2} (assuming success of a high probability event), the third, fourth and fifth inequalities were rearrangement of the sum, and
the final inequality came from (\ref{recdecomp}).
\end{proof}

\section{Conclusions and Future Work}
Our study considered the information theoretical problem of choosing which questions to ask in a game in which adversarially noisy
combinatorial pairwise information is input to a clustering algorithm.
We designed and analyzed a distribution from which drawing pairs is provably superior than the uniform distribution.
Our analysis did not take into account geometric information (e.g. a feature vector attached to each data point)
and treated the similarity labels as side information, as suggested in a recent line of literature.  It would be interesting to  study our solution in conjunction
with geometric information.  It would also be interesting to study our approach in the context of metric learning,
where the goal is to cleverly choose which pairs to obtain (noisy) distance labels for.  

\bibliographystyle{plainnat}
\bibliography{clustering}
\end{document}